\documentclass[aps,prx,reprint,amsmath,amssymb,longbibliography]{revtex4-2}

\usepackage{graphicx}   
\usepackage{bm}         
\usepackage{amsfonts}
\usepackage{amsthm}     
\usepackage{comment}    
\usepackage{booktabs}   
\usepackage{tabularx}   
\usepackage{array}      
\usepackage[table]{xcolor}
\usepackage{colortbl}   
\usepackage{hyperref}   
\usepackage{dcolumn}
\usepackage{soul}
\usepackage{tikz}

\hypersetup{
  colorlinks=true,
  linkcolor=blue,
  urlcolor=cyan,
  citecolor=blue
}

\newtheorem{theorem}{Theorem}[section]
\newtheorem{proposition}[theorem]{Proposition}
\newtheorem{definition}[theorem]{Definition}
\newtheorem{corollary}[theorem]{Corollary}

\DeclareMathOperator{\sgn}{sgn}

\begin{document}

\title{Machine-learnable Sets}

\author{Veit Elser}
\thanks{ve10@cornell.edu}
\affiliation{Department of Physics, Cornell}
\author{Manish Krishan Lal}
\thanks{manish.krishanlal@gmail.com}
\affiliation{Mathematics, Independent Researcher, India}

\date{\today}

\begin{abstract}
In this study we present a formal definition of large discrete sets having, informally, three properties: their elements are easily recognized, easily generated, and the latter tasks are easily learned from examples. The formalism is specialized to sets of binary strings and a definition of ``machine-learnability'' based on the existence of a bounded-complexity Boolean autoencoder that fixes the elements of the set. We present experiments where the autoencoders are implemented by nets of Boolean threshold functions. Machine-learnability is demonstrated for Rorschach patterns (that may have reversed contrast in the mirrored half), and considerably ``wilder'' sets whose elements are only approximately fixed by admissible autoencoders. In the second case we demonstrate a simple iteration that evolves wild sets to make them properly machine-learnable.
\end{abstract}

\maketitle

\section{Introduction}

Formal models of machine learning are used to establish what kinds of learning are possible in principle, and to provide a framework for evaluating its practice. ``Language identification in the limit,'' by Gold \cite{gold1967language}, is the best known example of the former and launched the field of algorithmic learning theory. The task considered was learning the grammar of an abstract language from examples, with computations limited only by a Turing machine. Valiant's \cite{valiant1984theory} \textit{Probably Approximately Correct} (PAC) learning model moved the focus in the direction of practice. Computations in the PAC model have polynomial complexity bounds, and exact learning is replaced by probabilistic guarantees. In this paper we move the needle even further, with artificial neural networks imposing explicit bounds on the computations. The model, called \textit{machine-learnable sets}, links learnability directly to the machine doing the learning. 

Language acquisition---in humans, not machines---is the example of learnable sets we are most familiar with. The set in question is the set $X$ of all grammatical, bounded-length sentences in some language. A notable feature of $X$ is that listing its elements is not feasible. In spite of this, $X$ has the following nice properties: 
\begin{enumerate}

\item Given a string (of symbols) $x$, it is easy to decide whether $x\in X$.

\item It is easy to uniformly sample all of $X$.

\item Though somewhat harder, doing tasks 1 and 2 can be learned from positive examples $x\in X^*\subset X$, where $|X^*|\ll |X|$.

\end{enumerate}
Less formally, a child of even distracted parents easily rises to the level of a grammatical chatterbox!
Language is of course much more than what these properties try to capture, and we make no claims that something like this lies at the heart of natural language. Still, as we explain next, language suggests a mechanism 
---something that a machine can implement---for realizing sets $X$ having the above properties.

Behind (above? under?) every grammatical sentence is its meaning. Speakers of a language have equal facility at transforming sentences into meanings and meanings into sentences. We tend to think of meanings as shared across different languages, with English and Hindi differing only in the transformations to and from sentences. We can formalize this way of looking at language with the autoencoder architecture of machine learning. The autoencoder receives a sentence in some language, easily transforms it into its compact, universal meaning, and just as easily transforms the meaning back to the original sentence.

The representation of sentences by symbols is well known, the representation of meanings much less so. If we represent both as strings of bits, then because the transformations in the autoencoder are deterministic, the two kinds of strings have equal entropy. An interesting case is where the sentence-strings are longer than the meaning-strings by some factor greater than two, and the length of the meaning-strings is close to the entropy of strings (of either kind) in bits. An autoencoder with these characteristics can act as the identity on only a very special subset of the sentence strings---those having an associated meaning-string---and these define the set $X$ in the list above. To perform task 1 one simply inputs $x$ and checks that $x$ comes out. The sentence-strings generated by the output half of the autoencoder, from freely sampled meaning-strings, accomplishes task 2. Property 3 in the list above is simply the observation that if the complexity of the autoencoder can be suitably bounded, then the information provided by a modest number of sentence-examples should suffice to reconstruct it uniquely.

The language example compels us to reverse the usual designations of the two autoencoder halves. Meanings are primary, and languages are the codes that meanings are encoded into and decoded from. In this paper, the encoder is always the output-half of the autoencoder. 

Language is also our motivation behind shifting attention from the functions that act on sets to the sets themselves. One of the legacies of MNIST is a
``they are what they are ...'' attitude about data, and to not let that discourage the pursuit of perfect classification. While this is the correct attitude in many if not most applications of machine learning, in the case of language we might question whether the data (e.g. a grammar) deserves the same unquestioned degree of permanence.

Suppose $X_0$ is a proto-language and speakers in generation $0$ learn it poorly---as an autoencoder $\mathcal{A}_0$---because of its faults. That is, $\mathcal{A}_0(X_0)=X_1$, where $X_1$ is close to but distinct from the proto-language $X_0$. As the representation of $X_1$ grows in the population, speakers of generation $1$ will start to learn language $X_1$, again imperfectly, as $\mathcal{A}_1$ and introduce yet another variant, $\mathcal{A}_1(X_1)=X_2$. And so on, where the refinements become smaller and the languages easier to learn. Spanish might not just be the easiest language to learn, but the most evolved!

The \textit{machine} qualifier in machine-learnable sets is what sets our learning model apart from previous models. Polynomial time-space limits on computations, as in PAC learning, seem extravagant compared to processing data by passing it through a neural network. A scaled-down model of computation also aligns with our experience of language acquisition: an automatic process distinct from ``cogitation.'' The autoencoder implementation of tasks 1 and 2 clearly fits this description. Until recently, task 3, which concerns the training of neural networks, could not be addressed because discrete representations and circuit (encoder/decoder) reconstructions are difficult for gradient-based optimization. However, first results \cite{elser2026learning} with a constraint-based method that trains networks of Boolean threshold functions showed that these limitations no longer apply.

The remainder of this paper is organized as follows. Related previous work is reviewed in Section \ref{sec:previous}. Machine-learnable sets are formally defined in Section \ref{sec:MLS} in application-blind terms. The meaning-strings of language just comprise another set called $Y$. Section \ref{sec:MLS} also presents the theory behind property 3 (a form of generalization) and a heuristic argument why the set-migration process described above, when iterating imperfect auto\-encoders, may converge.

Section \ref{sec:BTF} gives a brief review of the constraint-based approach for training networks whose neurons---Boolean threshold functions---have strict $\pm 1$ outputs. From the user side, working with \textit{boolnets} is not all that different from working with standard networks. Instead of minimizing loss, the learning algorithm ``closes the gap.''

Sections \ref{sec:RS} and \ref{sec:wild} present machine learning experiments with perfect and imperfect machine-learnable sets. In the first case, the autoencoder very quickly realizes it is dealing with Rorschach patterns when the pixel values conveyed to the network are forced to flow through a layer having half as many nodes as pixels. The task is made more challenging by flipping the contrast of the mirrored pixels in some of the data.

The machine-learnable sets in the second case (Sec. \ref{sec:wild}) are ``wild'' in that there is no simple principle, like Rorschach symmetry, in their definition. We first consider a set $X_0$ created by a random encoder circuit for which a decoder---even its architecture---is unknown. In the second example, where $X_0$ is down-sampled MNIST, not even the existence of an encoder is known. However, experiments show that refinements of $X_0$ by the language-inspired iteration scheme, all computed with the same autoencoder architecture, become increasingly machine-learnable.

The conspicuous absence of statistical learning theory (or established theory of any kind) in this work is addressed in Section \ref{sec:discussion}. Whereas guarantees of one form or another will always be valuable in the applications of machine learning, the more investigative approach taken in this paper provides an important complement.

\section{Related work}\label{sec:previous}

We were astonished to discover that several of the ideas that inspired our work had previously been advanced by Kirby et al. \cite{kirby2001iterated,kirby2008cumulative}. Not just the idea of agents learning with shared hardware as the basis for the evolution of structured data (language), but the more technical detail presented by ``the problem of being transmitted through a narrow bottleneck'' \cite{kirby2008cumulative}. Our work can be viewed as a formal distillation of these ideas, including demonstrations on structured data of much lower complexity than language.

As explained more fully in Section \ref{sec:discussion}, our methods lie outside the usual statistical-learning framework. We are able to dispense with distributions by the trick of forcing all activations to be discrete, and learning which latent configurations to reject. Variational autoencoders whose latent variables are discrete (through vector quantization) were developed by van den Oord et al. \cite{vandenoord2017neural}. This work also showed it is possible to learn distributions on the latent variables with a modified gradient-based optimizer. Our constraint-based optimizer allows \textit{all} neuron outputs to be discrete---a property needed if there is any hope of bounding the autoencoder's complexity.

\section{Machine-learnable sets}\label{sec:MLS}

Machine-learnable sets are sets of Boolean strings. Before defining them we define sets of functions in this domain.

\begin{definition}
$\mathcal{B}(m,n,k)$ is a class of the Boolean functions $\{0,1\}^m\to \{0,1\}^n$ that can be specified with $k$ bits of information. 
The integers $m$ and $n$ are called the input and output dimensions.
\end{definition}
\noindent In the next section, which covers the  implementation of Boolean functions, we will specialize to a particular class of functions. If you need something more concrete in the meantime, think of the $k$ in $\mathcal{B}(m,n,k)$ as a log-bound on the number of gates in a circuit. In general,  $k_1< k_2$ implies $\mathcal{B}(m,n,k_1)\subset \mathcal{B}(m,n,k_2)$.

As explained in the introduction, decoders and encoders have reversed roles in our definition of an autoencoder. Moreover, our definition of machine-learnable sets is interesting only when the dimension $m$ is so large that storing $2^m$ items is not feasible (and similarly for the dimension $n$).  
\begin{definition}\label{def:MLS}
A machine-learnable set $X\subset\{0,1\}^m$ of internal dimension $n<m$ is defined by the property
\begin{equation}
x\in X \iff \mathcal{A}(x)=x,
\end{equation}
where the autoencoder
\begin{equation*}
\mathcal{A}=\mathcal{E}\circ\mathcal{D}:\;\{0,1\}^m\to\{0,1\}^m\;,
\end{equation*}
is the composition of a decoder
\begin{equation*}
\mathcal{D}\in \mathcal{B}(m,n,k_\mathcal{D})\;,
\end{equation*}
and an encoder
\begin{equation*}
\mathcal{E}\in \mathcal{B}(n,m,k_\mathcal{E})\;.
\end{equation*}
\end{definition}

\begin{definition}\label{def:accept}
The acceptance of the autoencoder $\mathcal{A}=\mathcal{E}\circ\mathcal{D}$ is the probability
\begin{equation}\label{eq:accept}
\alpha=\underset{y\in\{0,1\}^n}{\mathrm{prob}}\Big(\mathcal{A}(\mathcal{E}(y))=\mathcal{E}(y)\Big)\;,
\end{equation}
where the $y$'s are sampled uniformly.
\end{definition}

\noindent These definitions, and access to the decoder $\mathcal{D}$ and encoder $\mathcal{E}$, account for the properties 1 and 2 of machine-learnable sets stated in the introduction. Membership testing, by checking if $x\in\{0,1\}^m$ is fixed by $\mathcal{A}$, is easy because the autoencoder has a complexity bound. By the second definition, the membership test applied to $\mathcal{E}(y)$, where $y\in\{0,1\}^n$ is uniformly sampled, succeeds with probability $\alpha$ or in $1/\alpha$ trials. Provided $\alpha$ is not very small, this too is easy. For property 3 (examples needed for learning) we need to know something about the size of $X$ (Section \ref{sec:infosuff}):

\begin{corollary}
When the encoder of the autoencoder $\mathcal{A}$ that defines the machine-learnable set $X$ is injective,
\begin{equation}
|X|=\alpha\, 2^n,
\end{equation}
where $\alpha$ and $n$ are the acceptance and internal dimension of $\mathcal{A}$.
\end{corollary}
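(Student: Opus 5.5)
The plan is to show that $\mathcal{E}$ restricts to a bijection between $Y=\{y\in\{0,1\}^n : \mathcal{A}(\mathcal{E}(y))=\mathcal{E}(y)\}$ and $X$. Then $|X|=|Y|$. Since $y$ is sampled uniformly from the $2^n$ strings of $\{0,1\}^n$, Definition \ref{def:accept} gives $\alpha=|Y|/2^n$, so $|Y|=\alpha\,2^n$, and the corollary follows.

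The first step is to show that $X\subseteq\mathcal{E}(\{0,1\}^n)$. By Definition \ref{def:MLS}, every $x\in X$ satisfies $x=\mathcal{A}(x)=\mathcal{E}(\mathcal{D}(x))$. Hence $x=\mathcal{E}(y)$ with $y=\mathcal{D}(x)$. This $y$ lies in $Y$, because $\mathcal{A}(\mathcal{E}(y))=\mathcal{A}(x)=x=\mathcal{E}(y)$. So $\mathcal{E}$ maps $Y$ onto $X$.

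The second step is the reverse inclusion. If $y\in Y$, then $\mathcal{E}(y)$ is fixed by $\mathcal{A}$, so $\mathcal{E}(y)\in X$ by the defining property. Thus $\mathcal{E}(Y)=X$, and in fact $Y=\mathcal{E}^{-1}(X)$.

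The last step uses injectivity of $\mathcal{E}$. Injectivity makes $\mathcal{E}|_Y:Y\to X$ one-to-one, so it is a bijection and $|X|=|Y|=\alpha\,2^n$.

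There is no real obstacle here; this is a counting argument. The one point to be careful about is where injectivity enters. Without it, several $y\in Y$ can map to the same $x$, and one only gets $|X|\le\alpha\,2^n$. It is also worth remarking that injectivity forces $\mathcal{D}\circ\mathcal{E}=\mathrm{id}$ on $Y$. Indeed, $\mathcal{E}(\mathcal{D}(\mathcal{E}(y)))=\mathcal{E}(y)$ implies $\mathcal{D}(\mathcal{E}(y))=y$. So $\mathcal{D}|_X$ is the inverse bijection, which is consistent with the picture of meanings being recovered from sentences.
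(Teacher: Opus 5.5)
Your argument is correct, and it is the reasoning the paper leaves implicit: the corollary is stated without proof, as an immediate consequence of Definitions~\ref{def:MLS} and~\ref{def:accept}, since the accepted encoder outputs $\mathcal{E}(y)$ are exactly the elements of $X$ and injectivity turns the count $\alpha\,2^n$ of accepted $y$'s into $|X|$. Your explicit identification $Y=\mathcal{E}^{-1}(X)$ with $\mathcal{E}(Y)=X$ spells this out, and your observation that without injectivity one only gets $|X|\le\alpha\,2^n$ is also correct.
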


\subsection{Encoder injectivity}

Encoder injectivity is in general difficult to establish. However, when $m$ is large and $m>2n$, then even a random encoder has a very high probability of being injective. If $M=2^m$ and $N=2^n$, this probability equals
\begin{equation*}
p_\text{inj}=\frac{M(M-1)\cdots(M-N+1)}{M^N}\;.
\end{equation*}
When $M$ is large,
\begin{align}
\log{p_\text{inj}}&\sim M\int_0^f\log{(1-z)} dz\nonumber\\
&=-M\Big(f+(1-f)\log{(1-f)}\Big)\;,\label{eq:pinjint}
\end{align}
where $f=N/M$. Since
\begin{equation*}
f=2^{-(m-n)}\to 0
\end{equation*}
for large $m$ and $m>2n$, we can keep just the leading order term of \eqref{eq:pinjint} for small $f$ with the result
\begin{equation*}
p_\text{inj}\sim e^{-N^2/(2M)}=e^{-2^{2n-m-1}}\;.
\end{equation*}
For fixed $m/n>2$ this approaches 1 exponentially with $m$.

\subsection{Autoencoder complexity}

Since the autoencoder $\mathcal{A}$ is the composition of the decoder and encoder, $\mathcal{A}\in \mathcal{B}(m,m,k_\mathcal{A})$ where $k_\mathcal{A}=k_\mathcal{D}+k_\mathcal{E}$. However, one should beware that its complexity might be lower. If $\Pi:\{0,1\}^n\to \{0,1\}^n$ is any bijection, and
\begin{subequations}\label{eq:perm}
\begin{alignat}{2}
\mathcal{D}'&=\Pi\circ\mathcal{D}&&\in\mathcal{B}(m,n,k'_\mathcal{D})\\
\mathcal{E}'&=\mathcal{E}\circ \Pi^{-1}& &\in\mathcal{B}(n,m,k'_\mathcal{E})\;,
\end{alignat}
\end{subequations}
then $\mathcal{E}'\circ \mathcal{D}'=\mathcal{E}\circ \mathcal{D}$ defines the same autoencoder. The minimum complexity autoencoder has log-complexity
\begin{equation*}
\min_{\Pi} (k'_\mathcal{D}+k'_\mathcal{E})\;.
\end{equation*}
Because $\Pi$ only permutes the samples to the encoder in the definition of the acceptance, $\alpha$ is unchanged by this action.

Henceforth, when we write $\mathcal{A}\in \mathcal{B}(m,m,k_\mathcal{A})$, it is understood that $\mathcal{A}$ has internal dimension $n$ and $k_\mathcal{A}$ is the sum of the log-complexities of the decoder and encoder. 

\subsection{Information sufficiency}\label{sec:infosuff}

The phrase ``can be learned'' in property 3 needs elaboration in a formal definition. Rephrasing this as ``can be learned in principle'' and using information theory we can get a handle on this property.

We start by defining, for any $\mathcal{A}\in\mathcal{B}(m,m,k_\mathcal{A})$ and $x\in\{0,1\}^m$, 
\begin{equation*}
\delta(\mathcal{A},x)=\left\{
\begin{array}{ll}
1, & \text{if } \mathcal{A}(x)=x,\\
0, &\text{otherwise .}
\end{array}
\right.
\end{equation*}
Then
\begin{equation*}
N(\mathcal{A})=\sum_{x\in\{0,1\}^m}\delta(\mathcal{A},x)
\end{equation*}
is the number of elements fixed by autoencoder $\mathcal{A}$, and
\begin{equation*}
M(x)=\sum_{\mathcal{A}\in\mathcal{B}(m,m,k_\mathcal{A})}\delta(\mathcal{A},x)
\end{equation*}
is the number of autoencoders (in our function class) that fix element $x$. Assuming the function class $\mathcal{B}(m,m,k_\mathcal{A})$ is invariant under permutations and negations of the $m$ inputs and outputs, the number $M(x)$ is independent of $x$ so we can simply replace $M(x)$ by $M$.

It would be nice if $N(\mathcal{A})$ had the analogous property of being independent of $\mathcal{A}\in \mathcal{B}(m,m,k_\mathcal{A})$. This is clearly not true for the circuit-like function classes we will be using. The next best thing is to postulate the \textit{concentration-of-measure} (COM) property
\begin{equation}\label{eq:COM}
N(\mathcal{A})\sim|X|,
\end{equation}
where randomly selected autoencoders in the class $\mathcal{B}(m,m,k_\mathcal{A})$ all fix a set of the same asymptotic size, written as $|X|$, as the parameters $m$, $n$, and $k_\mathcal{A}$ get large. The symbol $X$ coincides with our symbol for a particular machine-learnable set, but in the defining property it can be any machine-learnable set for $\mathcal{A}\in\mathcal{B}(m,m,k_\mathcal{A})$ since they all have the same asymptotic cardinality. The $\mathcal{A}$-independence statement \eqref{eq:COM} defines what we will call the COM model.

In the COM model we have an explicit formula for the probability $p(\mathcal{A},x)$ that a uniformly sampled $\mathcal{A}\in\mathcal{B}(m,m,k_\mathcal{A})$ fixes a uniformly sampled $x\in\{0,1\}^m$ :
\begin{align*}
p(\mathcal{A},x)&=p(x|\mathcal{A})\,p(\mathcal{A})\\
&=\left(\frac{1}{|X|}\delta(\mathcal{A},x)\right)\frac{1}{2^{k_\mathcal{A}}}\;.
\end{align*}
Here is the corresponding mutual information:
\begin{align}\label{eq:MI}
I_{\mathcal{A}\leftrightarrow x}&=\sum_{\mathcal{A}\in\mathcal{B}(m,m,k_\mathcal{A})}\;\sum_{x\in\{0,1\}^m}p(\mathcal{A},x)\log_2\left(\frac{p(\mathcal{A},x)}{p(\mathcal{A})p(x)}\right)\nonumber\\
&=\log_2\left(\frac{1}{|X|\;2^{k_\mathcal{A}}}\right)-\log_2\left(2^{-k_\mathcal{A}}\;2^{-m}\right)\nonumber\\
&=m-\log_2|X|\nonumber\\
&=m-n+\log_2(1/\alpha)\;.
\end{align}
The derivation assumed the encoder is injective and used the fact that all terms in the sum of $\delta(\mathcal{A},x)\log_2 \delta(\mathcal{A},x)$ are zero because $\delta(\mathcal{A},x)\in\{0,1\}$.

When the mutual information formula is used for independent samples $x\in X^*$ known to have the property that they are fixed by one particular $\mathcal{A}$, their aggregate information may enable us to reconstruct that $\mathcal{A}$. In the COM model we have a quantitative condition for information sufficiency:

\begin{proposition}\label{prop:X*size}
When the learning algorithm is working under the assumption that every element of the example subset $X^*\subset X$ is fixed by a particular $\mathcal{A}\in\mathcal{B}(m,m,k_\mathcal{A})$ whose encoder is injective, then in the COM model there is sufficient information to reconstruct $\mathcal{A}$ provided
\begin{equation}\label{eq:X*size}
|X^*|>\frac{k_\mathcal{A}}{m-n+\log_2{(1/\alpha)}}\;.
\end{equation}
\end{proposition}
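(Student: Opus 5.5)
The plan is an information-counting argument inside the COM model. It uses the mutual information formula \eqref{eq:MI} as the per-example yield and compares the aggregate yield of $X^*$ with the $k_\mathcal{A}$ bits needed to single out $\mathcal{A}$. The prior on autoencoders is uniform over $\mathcal{B}(m,m,k_\mathcal{A})$, which has at most $2^{k_\mathcal{A}}$ members, so the entropy of $\mathcal{A}$ is $k_\mathcal{A}$ bits. Reconstructing $\mathcal{A}$ means driving the posterior entropy to zero, so the requirement is that the examples supply more than $k_\mathcal{A}$ bits about $\mathcal{A}$.

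First, I would compute the information carried by a single example. By \eqref{eq:MI}, for an injective encoder with $|X|=\alpha 2^n$ (the Corollary), one example $x$ fixed by $\mathcal{A}$ carries $I_{\mathcal{A}\leftrightarrow x}=m-n+\log_2(1/\alpha)$ bits. The interpretation I would make explicit is this: conditioning on ``$x$ is fixed by $\mathcal{A}$'' keeps only the autoencoders that fix $x$. There are $M$ such autoencoders, and $M$ is independent of $x$ by the permutation/negation invariance. Counting pairs gives $M\,2^m = 2^{k_\mathcal{A}}|X|$, so $M/2^{k_\mathcal{A}} = |X|/2^m = 2^{-(m-n+\log_2(1/\alpha))}$. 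Each example therefore shrinks the candidate set by exactly the factor predicted by the mutual information.

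Second, I would aggregate over $|X^*|$ independent examples. Because the samples are drawn independently and uniformly from the fixed set, and the COM property makes $N(\mathcal{A}')\sim|X|$ for every competing $\mathcal{A}'$, a wrong autoencoder $\mathcal{A}'\neq\mathcal{A}$ fixes a given random $x$ with probability $\sim|X|/2^m$. It fixes all of $X^*$ with probability $\sim(|X|/2^m)^{|X^*|}$. Summing over the at most $2^{k_\mathcal{A}}$ competitors, the expected number of spurious autoencoders consistent with $X^*$ is
\begin{equation*}
2^{k_\mathcal{A}}\,2^{-|X^*|\,(m-n+\log_2(1/\alpha))}.
\end{equation*}
This is below $1$ exactly when \eqref{eq:X*size} holds. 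In information language, the total information $|X^*|\,I_{\mathcal{A}\leftrightarrow x}$ exceeds $k_\mathcal{A}$, so $\mathcal{A}$ is determined with the uniqueness the COM heuristic allows. Finally, I would note that the gauge freedom \eqref{eq:perm} identifies autoencoders with the same action, so ``reconstruct'' means up to $\Pi$. This only lowers the effective $k_\mathcal{A}$ and therefore preserves the sufficiency condition.

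The main obstacle is justifying additivity of the per-example information, i.e.\ treating the events ``$\mathcal{A}'$ fixes $x_i$'' as independent across examples for a wrong $\mathcal{A}'$. Fixed sets of related autoencoders (for example, ones sharing a decoder) overlap strongly, so competitors close to $\mathcal{A}$ are consistent with the examples far more often than the random estimate predicts. I would handle this by stating that the proposition is a sufficiency statement within the COM model, where fixed sets are treated as typical random subsets of size $|X|$. Under that postulate the union bound above is the whole argument, and correlations between nearby autoencoders are exactly what the model idealizes away.
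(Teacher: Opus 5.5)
Your proposal is correct and is the paper's argument: the per-example yield $I_{\mathcal{A}\leftrightarrow x}=m-n+\log_2(1/\alpha)$ from \eqref{eq:MI}, multiplied by $|X^*|$, must exceed the $k_\mathcal{A}$ bits that specify $\mathcal{A}$. Your union-bound count $2^{k_\mathcal{A}}(|X|/2^m)^{|X^*|}<1$ is a worthwhile addition: it makes explicit the idealization that the paper's one-line proof uses tacitly, namely that fixed sets of distinct autoencoders behave like independent random subsets, which goes beyond the size postulate $N(\mathcal{A})\sim|X|$ and is what lets the per-example information add (in general $|X^*|\,I_{\mathcal{A}\leftrightarrow x}$ is only an upper bound on the information the examples carry).
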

\begin{proof}
In the COM model we can use \eqref{eq:MI} for the information provided by each element $x\in X^*$. Since we need $k_\mathcal{A}$ bits to reconstruct $\mathcal{A}$, the condition for information sufficiency is
\begin{equation*}
|X^*|\;I_{\mathcal{A}\leftrightarrow x}>k_\mathcal{A},
\end{equation*}
from which criterion \eqref{eq:X*size} follows.
\end{proof}

Inequality \eqref{eq:X*size} is uninteresting without some context. Proposition \ref{prop:X*size} is silent on the \textit{algorithm} being used to learn $X$ from the small set of examples, and how this algorithm is restricted to autoencoders of a given complexity. The experimental half of our work addresses this and the results provide bounds on the information-theoretic threshold.

It is easy to experimentally establish a number $N_0$ such that if $|X^*|<N_0$, then an autoencoder that learns how to fix $X^*$ will fail to fix the rest of $X$. Such experiments provide a lower bound on the complexity $k_\mathcal{A}$ of the class of Boolean functions that the learning algorithm is operating under. Improving the bound, experimentally by increasing the $N_0$ target, becomes increasingly hard and eventually can no longer be considered easy. 

It is also easy to establish, again experimentally, a number $N_1$ such that if $|X^*|>N_1$, then an autoencoder can easily learn how to fix all of $X$. As with $N_0$, the work required to decrease $N_1$ eventually also becomes prohibitive. So while the numbers $N_0$ and $N_1$ only provide bounds on the true threshold \eqref{eq:X*size}, in practice they represent transitions because we are mostly interested in evidence that is easily obtained. In experiments on two kinds of machine-learnable sets we find that it is relatively easy to obtain $N_1$ and $N_0$ where the former exceeds the latter by only a small factor.

An ideal experiment would allow us to explicitly specify the parameter $k_\mathcal{A}$ of the learning algorithm. Though the algorithm described in Section \ref{sec:BTF} falls short of that, we believe that our estimates of $k_\mathcal{A}$ for that algorithm are in the right ballpark.   

\subsection{Evolution of machine-learnability}\label{sec:evolution}

Let $X_0, X_1, X_2,\ldots$ be a sequence of \textit{imperfect} machine-learnable sets. Imperfection is to be understood as follows. The autoencoders being used to learn these sets, all with internal dimension $n$ and some fixed bound on their complexity, are insufficient (lack capacity) to fix all the elements of their set. The best that the learning algorithm can do is find good minimizers (autoencoders) $\mathcal{A}_i$ of the average Hamming distance:
\begin{equation}\label{eq:Hamdist}
d_\text{ave}=\frac{1}{|X_i|}\sum_{x\in X_i}d_H\big(x,\mathcal{A}_i(x)\big)\;.
\end{equation}
As a result, $\mathcal{A}_i(X_i)$ is not the same as $X_i$ but a slightly different set, and motivates us to define the iteration
\begin{equation}\label{eq:setevolve}
\mathcal{A}_i(X_i)=X_{i+1}\;.
\end{equation}

To see why iteration \eqref{eq:setevolve} might converge, consider a set $X$ with the property that one iteration already produces a fixed point, that is
\begin{equation*}
\mathcal{A}(X)=X'\ne X\;,
\end{equation*}
but \begin{equation*}
\mathcal{A}'(X')=X'\;.
\end{equation*}
This can happen even when $\mathcal{A}'=\mathcal{A}$ (the autoencoder did not have to change to be successful) and is the scenario rendered in Figure \ref{fig:auto}. The middle layer shows the space of decoder-outputs and encoder-inputs, $\{0,1\}^3$, only six elements of which, $Y$, have a role in this autoencoder for a 6-element set $X$. The action of the decoder is best understood in terms of its inverse images, which decompose the input space of the autoencoder, $\{0,1\}^m$, into a disjoint union. The triangles in the diagram show how the decoder partitions this space. The points $\{0,1\}^m$ are rendered as a gray continuum, emphasizing that they vastly outnumber the points $\{0,1\}^3$. In this example six of the partitions are occupied by a single point of $X$. The decoder maps these to $Y$, and the encoder then to $X'$, shown embedded in another copy of $\{0,1\}^m$ at the top of the diagram. We get the single-iteration fixed-point property just by having the points $X'$ fall inside the same partitions used by the original points $X$. The evolution $X\to X'$ is indicated by the arrows in the lower copy of $\{0,1\}^m$. These map the original points to the points the encoder would like them to be (rendered white) while staying inside the same partitions.

\begin{figure}
    \centering
    \includegraphics[width=\columnwidth]{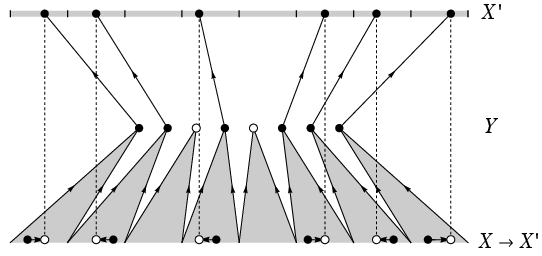}
    
    \caption{A 6-element set $X$ (black points in lower row) are mapped by the decoder to the set $Y$ (black points in middle row) and then to the set $X'$ (black points in upper row) by the encoder. The internal dimension is $n=3$ and $\alpha=6/8$. The data space $\{0,1\}^m$ has so many points it is rendered as a gray continuum, which is partitioned into the inverse images of the decoder (gray triangles). In this example, because the elements of $X'$ lie in the same partitions as the elements of $X$, the refinement $X\to X'$ yields a true learnable set.}
    \label{fig:auto}
\end{figure}

A more serious way for a set $X$ to fail the ``autoencoder test'' (than depicted in Figure \ref{fig:auto}), is when some of the decoder's partitions of $\{0,1\}^m$ contain more than one element of $X$. It is not correct to expect that in this case the iteration \eqref{eq:setevolve} collapses the joint occupants into a single element, thereby decreasing the size of $X$. This is a reasonable expectation only if we assume the learning algorithm initializes autoencoder $\mathcal{A}_{i+1}$ as $\mathcal{A}_{i}$, so that initially the decoder's partitions are the same. Better learning algorithms are not initialized this way, but given the opportunity to learn a better decoder, one where there are fewer instances of multiple occupancy in the partition of $\{0,1\}^m$. For example, in the algorithm described in the next section and used in our experiments, the parameters of the autoencoder are initialized randomly each time it is run. It is remarkable that even under these \textit{de novo} conditions the experiments show a systematic improvement in learning when the algorithm is presented with data as refined by \eqref{eq:setevolve}. The implied heuristic is that for $X_{i+1}$, unlike $X_{i}$, we know there exists an autoencoder ($\mathcal{A}_i$) satisfying the complexity bound that can perfectly learn at least some large fraction of the data (those which lie in singly-occupied decoder-partitions and are mapped by $\mathcal{A}_i$ into the same partition).

Up to now we have only considered what makes machine-learnable sets learnable---the existence of an autoencoder with certain properties---and not the learning process itself. Without a practical learning algorithm to construct the autoencoder, the definition of these sets would not have much value. This is addressed in the next section.

\section{Learning with boolnets}\label{sec:BTF}

Boolnets are networks of Boolean threshold functions (BTFs). The latter are Boolean functions where the 0/1 values on the hypercube are linearly separable:   
\begin{equation*}
f_w(x)=\text{sgn}(w\cdot x)\;.
\end{equation*}
The components of the continuous parameter vector $w$ are the BTF's \textit{weights}. 
We have switched to $\pm 1$ values for the Boolean variables ($-1$ corresponds to $0$),
\begin{equation*}
f_w:\{-1,1\}^m\to \{-1,1\}\;,
\end{equation*}
where $m$ is the input dimension. In a boolnet each BTF input is the output of another BTF or an input to the network itself. Each of the network's outputs is the output of some BTF in the network.

Because $x$ is discrete, there is a continuum of $w'$ that define the same BTF as $w$, and we write $w'\sim w$. Often some components of $w$ are so small in magnitude, relative to the others, that they have no effect on the value of $f_w$. The number of ``relevant'' inputs is the BTF's \textit{support}, written $\text{supp}(w)$. Another important characteristic of a BTF is its \textit{margin}:
\begin{equation}\label{eq:margindef}
\mu(w)=\min_{x\in \{-1,1\}^m}|w\cdot x|\;.
\end{equation}
By fixing the norm of the weight vector,
\begin{equation*}
\|w\|^2=m\;,
\end{equation*}
the complexity of a BTF can be controlled by bounding its margin. In \cite{elser2026learning} it is shown that the constraint
\begin{equation}\label{eq:margincon}
\mu(w)\ge \sqrt{\frac{m}{\sigma}}
\end{equation}
can only be satisfied if
\begin{equation}\label{eq:suppbnd}
\text{supp}(w)\le \sigma\;.
\end{equation}
The number $\sigma$ is called the support parameter. Bound \eqref{eq:margincon} is saturated when $\sigma=n$ is an odd integer and the weight vector has $n$ equal-magnitude components. 

A boolnet with BTF parameter $\sigma=3$ and sufficient size (network depth and width) is able to express arbitrary Boolean functions. That's because BTFs with three equal-magnitude weights implement the \textsc{Maj} (majority) gate. When one of the three relevant inputs is held fixed, say at $-1$, the BTF implements \textsc{And} or \textsc{Or}, depending on the sign of the weight of the fixed input. The setting $\sigma=3$ also admits BTFs with just one relevant input. These are \textsc{Copy} or \textsc{Not} gates, depending on the sign of the corresponding weight.

The margin constraint in boolnets serves multiple purposes. Positive margins ensure that the learned weights never lead to borderline situations, where the argument $w\cdot x$ of $\text{sgn}(\;)$ is small. Boolnets whose BTFs all have small support are sparse and interpretable (circuits comprising few-input gates). Sparsity is conferred through the ``neurons,'' instead of a globally imposed limit on the number of ``wires.'' In this work, where boolnets define the Boolean function class that the learning algorithm works with, the margin constraint gives us a handle on the complexity parameter $k_\mathcal{A}$ of the autoencoder.

\subsection{Boolnet complexity}

The learning algorithm in our experiments  works with the function class $\mathcal{B}(m,m,k_\mathcal{A})$ defined by boolnets that have a layered architecture and a global support parameter $\sigma$ constraining the BTF margins. The boolnet auto\-encoders are comprised of $L$ layers, where at each of the $m_\ell$ nodes in layer $\ell$ there is a BTF that takes input from the $m_{\ell -1}$ BTFs in layer $\ell-1$ or, when $\ell=1$, the $m$ autoencoder inputs. The $m_L=m$ BTFs in the final layer produce the autoencoder's outputs. All the BTFs in the boolnet also take input from a fixed-value node of the network.

The learning algorithm searches for BTF weights that satisfy constraints in $|X^*|$ instantiations of the boolnet, whose inputs and outputs are constrained by an element of $X^*$. The weight vectors $w$ of BTFs in layer $\ell$ have squared norm $\|w\|^2=m_{\ell-1}$ and satisfy
\begin{equation}
|w\cdot x|\ge \sqrt{\frac{m_{\ell-1}}{\sigma}}
\end{equation}
for each of the $|X^*|$ input vectors $x$ that arise at the BTF in the course of satisfying all the constraints posed by $X^*$. This constraint is equivalent to \eqref{eq:margincon} only if all possible $2^n$ values arise on the BTF's support when $\text{supp}(w)=n$ for the weight vector $w$ in the solution of the constraint problem. We will call this the \textit{diversity hypothesis}.

We get a good upper bound on the complexity parameter $k_\mathcal{A}$ (for the function class under consideration) when the diversity hypothesis is true. We specialize for the case $\sigma=3$, the support parameter used in all of our experiments. All BTFs will then have either 1 or 3 relevant inputs, and because even the magnitudes are fixed (equal) in the last case, the BTF variety comes only from the signs of weights. A BTF in layer $\ell$ is therefore limited to the following number of forms:
\begin{equation*}
C_\ell=2^1\binom{m_{\ell-1}+1}{1}+2^3\binom{m_{\ell-1}+1}{3}\;.
\end{equation*}
The first term corresponds to \textsc{Copy} or \textsc{Not}, and incrementing $m_{\ell-1}$ by 1 takes into account the single fixed-value network input (used by \textsc{And}/\textsc{Or}).
For the $L$-layer network we then have the following upper bound on the number of Boolean functions that can be synthesized:
\begin{equation*}
\prod_{\ell=1}^L C_\ell^{m_\ell}\;.
\end{equation*}
This overcounts because the nodes in the hidden layers may be freely permuted without changing the function. Negating all the inputs to a BTF in a hidden layer and compensating by negating its output (by flipping the signs of weights) also does not change the function. A better bound is therefore
\begin{equation*}
C_\text{net}=C_L^{m_L}\prod_{\ell=1}^{L-1} \frac{C_\ell^{m_\ell}}{m_\ell !\;2^{m_\ell}}\;.
\end{equation*}
This still is only an upper bound. For example, when few BTFs have support 3---the rest are just copying/negating values from one layer to the next---these can be staged in multiple ways in a deep network without changing the function. We suspect that $C_\text{net}$ covers the dominant sources of overcounting in large networks.

Under the assumption that the diversity hypothesis is true, in a network with 
\begin{equation*}
\mathcal{N}=\sum_{\ell=1}^L m_\ell
\end{equation*}
BTFs and typical layer size $\overline{m}$, the number of bits needed to specify a Boolean function in our class has asymptotic growth
\begin{equation}\label{eq:knet}
k_\text{net}=\log_2 C_\text{net}\sim 2 \mathcal{N}\log_2 \overline{m}\;.
\end{equation}
While the empirical bounds on the true complexity $k_\mathcal{A}$ obtained in our experiments are in the same ballpark, we will see that $k_\text{net}$ is definitely an underestimate. The reason for this is also brought out in the experiments: the diversity hypothesis is strongly violated in the decoder-half of our autoencoders. The constraints on a BTF are weaker when only a proper subset of the $2^n$ possible inputs, on a $w$ with support $n$, are present during learning. As a result, more BTF types arise in practice and $C_\text{net}$ undercounts the true number.

\subsection{Learning algorithm}

We use the reflect-reflect-relax (RRR) constraint satisfaction algorithm to learn the weights in our boolnets. To use this algorithm the constraints are represented as geometrical sets $A$ and $B$, such that solutions to the constraint problem are points in the intersection, $A\cap B$. The sets $A$ and $B$ live in a continuous space $Z$ whose dimensions correspond to all the weights, inputs and outputs of the BTFs.

Associated with BTF $i$ is the set $A_i$ corresponding to all solutions of
\begin{align*}
\text{sgn}(w_i\cdot x_i)&=y_i\\
|w_i\cdot x_i|&\ge \sqrt{\frac{m_i}{\sigma}}\;,
\end{align*}
where $m_i$ is the number of inputs and $\sigma$ is the support parameter. Though the scalar variable $y_i$ should be identified with components of input vectors of other BTFs (or an output of the boolnet), $y_i$ is treated as an independent variable in constraint $A_i$. In this way the constraints at all the BTFs are decoupled or ``factorized'' in the $A$ constraint:
\begin{equation*}
A=A_1\times A_2\times \cdots \times A_\mathcal{N}\subset Z\;.
\end{equation*}
The rationale for doing this is that finding the distance-minimizing change $(w,x,y)\to(w',x',y')$ that satisfies $A_i$ is easy to compute when the BTFs are decoupled. Distance-minimizing transformations that satisfy constraints are called projections, and the RRR algorithm uses two: $P_A$ and $P_B$. The equations that define set $B$ are again decoupled: $B_i$ is the simple equality that applies between the output $y_i$ and the $x$-components of the receiving BTFs. $B_i$ also constrains $w_i$ to have squared norm $m_i$. Putting the norm constraint in $B$ makes the projection to $A$ (which also involves the weights) much simpler.

\begin{figure*}
    \centering
    \includegraphics[width=2\columnwidth]{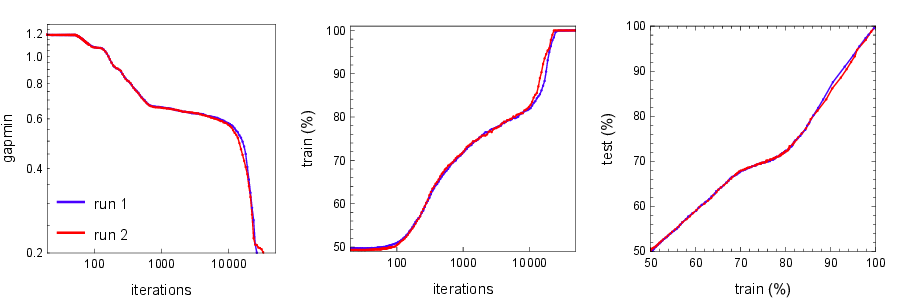}
    
    \caption{Gap-plot (left), train-accuracy (center), and learning curve (right) in two runs of the RRR constraint-satisfaction algorithm when learning the $8\times 8$ Rorschach set. There are 200 points in each plot, logarithmically spaced in iteration count.}
    \label{fig:RS4runs}
\end{figure*}

The strategy of replicating variables, like BTF inputs and outputs, to make constraint projections easy to compute is called \textit{divide-and-concur} \cite{gravel2008divide}. If inputs and outputs were only subject to the $A$ constraint, then only the outputs $y$ would be discrete ($\pm 1$). But the equalities between inputs and outputs in the $B$ constraint, also called ``concur'', ensure that in a solution the components of $x$ will be discrete as well.

In gradient-descent learning algorithms a layered network architecture makes sense because the updates use the back-propagation formula. This is not a consideration with divide-and-concur. The decoupling of the constraint projections at the level of individual BTFs, in both $A$ and $B$, means that these computations can be performed concurrently even in non-layered networks.

In our learning algorithm the replication of variables is taken one step further. The $(w_i,x_i,y_i)$ trio at BTF $i$ is replicated as many times as there are training data, $|X^*|$. These ``data replicas'', differing only in the constraints imposed at the input and output nodes of the boolnet, are completely independent and subject to the $A$ and $B$ constraints described above. However, a key detail should not be overlooked: the data-replicas should share the same set of weights! This is taken care of in the $B$ constraint, where data-relicas of the weight vectors are projected to the same vector (the average of the replicas) in addition to a vector of the correct norm (by scaling).

RRR is usually initialized with a random point $z\in Z$. Recall that ``point'' means a $(w,x,y)$ trio for each BTF and each training-data item. Since all three variable types have mean-squared value 1 in a solution, and there are symmetries that allow the signs to be flipped, we initialize with uniform random samples between $1$ and $-1$ for all the components of $z$. The initial point is updated by the RRR iteration rule:
\begin{equation}\label{eq:RRR}
z\to z+ \beta\Big(P_B(R_A(z))-P_A(z)\Big)\;.
\end{equation}
The parameter $\beta>0$ is analogous to the learning rate of gradient-descent optimizers and $R_A(z)=2P_A(z)-z$ is ``reflection in constraint $A$.'' The name RRR is a reference to the fact that the iteration rule can be written as a relaxation (by parameter $\beta$) of $R_B\circ R_A$. The form \eqref{eq:RRR} is more transparent about what happens at a fixed point $z^*$ :
\begin{equation*}
P_B(R_A(z^*))=P_A(z^*)\in A\cap B\;.
\end{equation*}
Though this follows even without $R_A$, the reflection is essential for fixed points to be attractive. For the history of RRR and its application to a broad selection of nonconvex problems, see \cite{elser2025solving}. Additional details on using RRR to train boolnets can be found in \cite{elser2026learning}.

Our experimental results are mostly conveyed through three kinds of plots. To assess the progress of the constraint solver one is interested in the \textit{gap}, defined by
\begin{equation*}
\Delta=\|P_B(R_A(z))-P_A(z)\|\;.
\end{equation*}
This measures the currently achieved distance between the two constraint sets. By the iteration rule it is also the speed of $z$ as it moves through $Z$. We normalize $\Delta$ as a root-mean-square quantity, where the averaging is over BTFs and training-data items. More concretely, the gap in the plots represents the typical change (total distance) of the variables in one $(w,x,y)$ trio. Our ``gap-plots'' show \texttt{gapmin}, the minimum value of the gap achieved at the time of the current iteration. Our iterations-axes are logarithmic, so that (not surprisingly) the gap is nearly always strictly decreasing.

The other two plots show the progress of the autoencoder's accuracy. The latter is defined as the average fraction of the autoencoder's $m$ outputs that match its inputs, or using definition \eqref{eq:Hamdist}, $1-d_\text{ave}/m$. In the accuracy called \texttt{train} the averaging is over all elements of $X^*$, while the complement of $X^*$, in a set of $10^4$ samples of $X$ not seen by the constraint solver, is used to evaluate the accuracy called \texttt{test}. ``Train-accuracy'' shows the evolution of \texttt{train} with iterations. ``Learning-curves'' show the joint evolution of \texttt{train} and \texttt{test}.

Examples of the three kinds of plots are shown in Figure \ref{fig:RS4runs}. These show the learning of a Rorschach set, the subject of the next section. The plots show two runs (red and blue) and convey that in all respects---optimization, accuracy, generalization---the results are independent of the constraint solver's initialization. The gap axis (left panel) is logarithmic to better show the transition from a prolonged ``active-learning'' phase, where the gap decreases slowly (though systematically), to a more ``convergent-learning'' phase. From the train-accuracy (center) we see that training accuracy can be perfect while the gap is nonzero. This means that the optimizer is still working on satisfying all the margin constraints even though the Boolean function the boolnet has implemented is correct. The learning-curve (right) shows that the learning of $X$ has succeeded because the test accuracy is also at 100\%.

All of our experiments were performed with \texttt{boolearn}. Readers interested in reproducing or extending our results can find data sets at the \href{https://github.com/veitelser/boolearn}{\texttt{boolearn} site}. The learning algorithm, \texttt{train}, has two hyperparameters: the time-step $\beta$ in \eqref{eq:RRR}, and a metric relaxation rate $\gamma$ \cite{elser2026learning}. We used $\beta=0.3$ and $\gamma=10^{-3}$ in all the experiments.

\section{Rorschach sets}\label{sec:RS}

Rorschach patterns are a simple kind of learnable set. Figure \ref{fig:RSsamples} shows some $10\times 10$ patterns. Pixels related by the vertical mirror have the same contrast, or for added interest, the reversed contrast (lower row). The learning algorithm knows nothing about the spatial arrangement of the pixels---each data item is just a string of 100 bits. The true ``content'' in these data is the value of just 50 of the bits plus an additional bit on whether the values of the corresponding mirror-bits should be negated. The autoencoder that learns this data will have to decode the mysterious strings into their true content, and then be able to encode the content back to the strange code.

\begin{figure}[b]
    \centering
    \includegraphics[width=\columnwidth]{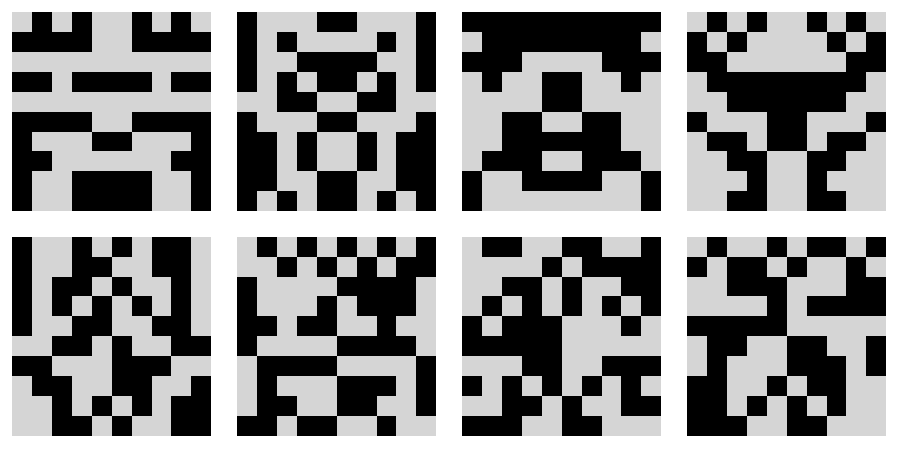}
    
    \caption{Samples of $10\times 10$ Rorschach patterns. In the samples of the lower row the mirrored pixels have had their contrast reversed.}
    \label{fig:RSsamples}
\end{figure}

\subsection{Scaling}

We first study how the learning of Rorschach sets scales with size. Our experiments cover sizes $2s\times 2s$, $2\le s\le5$. Through experimentation we arrived at the following 3-layered autoencoder architecture:
\begin{equation}\label{eq:RSarch}
4s^2\;\to \;2s^2+4\;\to\; 6 s^2\;\to\; 4s^2\;.
\end{equation}
It was a surprise to us that architectures this simple could learn these sets. We augmented the decoder output by three bits (over the $2s^2+1$ bits of content) to test the rejection-sampling scheme for generating Rorschach patterns. The size of the encoder's single hidden layer can be reduced, though we do not know by how much.

\begin{figure*}
    \centering
    \includegraphics[width=2\columnwidth]{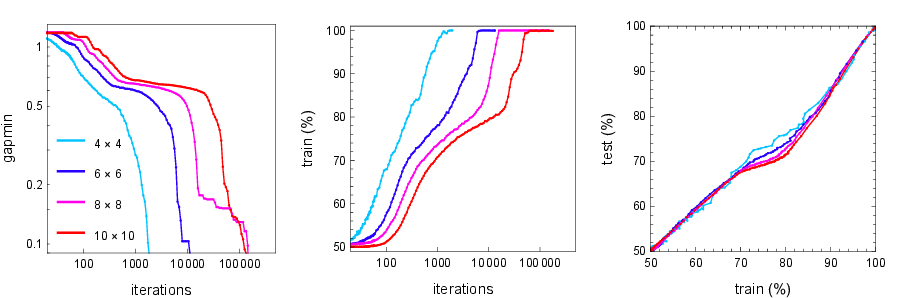}
    
    \caption{Scaling of Rorschach-set learning as the size of the patterns is increased.}
    \label{fig:RSscaling}
\end{figure*}

Using information sufficiency (Prop. \ref{prop:X*size}) to select the sizes of the training sets is probably not the best choice for investigating scaling. The work to learn a set surely diverges near the threshold, and the degree of divergence surely depends on $s$. By Proposition \ref{prop:X*size} and the estimate \eqref{eq:knet}, the threshold size $|X^*|$ grows linearly with the number of BTFs. We found that
\begin{equation}\label{eq:Xksize}
|X_s^*|=64 s^2
\end{equation}
examples, for the $2s\times 2s$ patterns, puts us safely on the information-sufficient side of the transition. The experiments extend to $s=5$. Another round of experiments will address the location of the learnability transition for $s=4$.

The left panel of Figure \ref{fig:RSscaling} shows the evolution of the gap-plots with $s$. Learning becomes increasingly protracted (small slope region) with increasing $s$, but not alarmingly so. The number of iterations before the plunge to the RRR fixed-point appears to grow quadratically with the number of BTFs, $144 s^4$. This behavior is mirrored in the train-accuracy (center panel). The accuracy reaches 100\% when the gap is below $0.2$. Learning curves are shown in the right panel. The test accuracies keep pace with the training accuracy, also reaching 100\%. We will see learning curves with more structure when the number of examples approaches the learnability threshold.

\subsection{Decoder}

The architecture \eqref{eq:RSarch} is too small for the ``obvious'' autoencoder circuit. This is where one representative of each mirror pair is threaded through the bottleneck and routed to the same position on the output side. The partner of one mirror pair is also held in reserve in the decoder's output, to determine if the mirror pairs across the whole pattern should have their values negated.

Let $x_{p}, x_{-p}$ be the Boolean values of the reserved mirror pair whose equality/inequality determines contrast reversal, and $x_{q}, x_{-q}$ any of the other mirror pairs. The decoder output only has space for one of the latter variables, say $x_{q}$. But the encoder can compute the other one with the formula
\begin{equation*}
x_{-q}=\text{\textsc{Xor}}(x_{q},x_{p},x_{-p})
\end{equation*}
and route this value to position $-q$ in the autoencoder's output. However, \textsc{Xor} is not linearly separable and must be computed with several BTFs. It can be done in a 2-layered encoder with four 3-input \textsc{And} gates per pixel $q$, whose outputs are then combined with \textsc{Or}. But that requires $4\times 2 s^2$ nodes in the hidden layer. Composing two 2-input \textsc{Xor} gates reduces the layer width but doubles the number of layers.

As inspiration for what the decoder does instead, consider the way mirror pairs are identified in correlation analysis. In ordinary Rorschach patterns (without the possibility of contrast reversal) pixel pairs $p$ and $-p$ are identified by the property
\begin{equation}\label{eq:RSave}
\langle x_{p}\; x_{-p}\rangle>0\;,
\end{equation}
where the angle brackets denote averaging over examples. Pixel pairs not mirror-related have zero average. When there is contrast reversal, correlation  \eqref{eq:RSave} is zero, but 4-variable combinations
\begin{equation}\label{eq:RS2ave}
\langle x_{p}\; x_{-p}\;x_{q}\;x_{-q}\rangle>0\;,
\end{equation}
now have positive average.

\begin{figure}[b]
    \centering
    \includegraphics[width=\columnwidth]{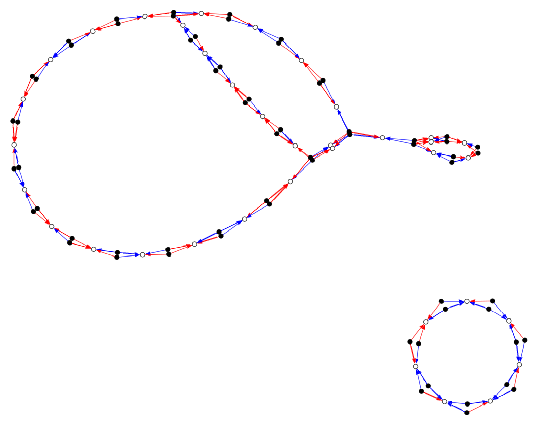}
    
    \caption{Decoder circuit, comprising two connected components, of an autoencoder that has learned the $8\times 8$ Rorschach set. The black node-pairs correspond to mirror-related pixels. White nodes are the decoder's outputs. The function of the circuit can be read off the diagram when BTFs with equal magnitude weights are placed at each white node and negative weights are placed at all the red arrows. The pattern of weights gives the output nodes a cyclic order.}
    \label{fig:RS4dec}
\end{figure}

Property \eqref{eq:RS2ave} is exactly the principle behind the RRR-trained decoder. The complete decoder circuit from an $8\times 8$ experiment is shown in Figure \ref{fig:RS4dec}. The decoder's $64$ inputs are rendered black, its $32+4$ outputs are the white nodes. The BTF at each output has only four inputs that are relevant when the latter have Rorschach symmetry. These are red/blue arrows, where red means the weight for that input is negative. Recall that the decoder is initialized as fully connected with uniform random weights. Figure \ref{fig:RS4dec} shows the decoder after learning, and when all but the four largest-magnitude weights at each BTF are set to zero---without changing the autoencoder's function.

The decoder circuits are not unique. The one in Figure \ref{fig:RS4dec} is typical, with a well defined cyclic order as brought out in the smaller of the two connected components. Decoders for the other Rorschach sizes have the same design principle. Input pairs, corresponding to mirror-pixel pairs, send like-sign-weighted values in a clockwise sense, and opposite weight-signed values counterclockwise. The value $y$ of any of the decoder's outputs is given by the formula
\begin{equation}\label{eq:RSBTF}
y=\text{sgn}(x_{p}+x_{-p}+x_{q}-x_{-q})\;,
\end{equation}
where $(p,-p)$ is the ``clockwise'' pixel pair and $(q,-q)$ the ``counterclockwise'' pair. The continuous weights found by RRR have been replaced with the simpler, equivalent set
\begin{equation*}
(w_{p},w_{-p},w_{q},w_{-q})\sim (1,1,1,-1)
\end{equation*}
when the pairs $(x_{p},x_{-p})$ and $(x_{q},x_{-q})$ have Rorschach symmetry. In particular, when $x_{p}=x_{-p}$ and $x_{q}=x_{-q}$, then $y$ inherits the value of $x_{p}$. If instead $x_{p}=-x_{-p}$ and $x_{q}=-x_{-q}$, then $y=x_{q}$.

We frankly do not understand the rationale behind the decoder design. One half of the Rorschach pattern is stored in the same number of bits, but in two ways---depending on contrast reversal---related by a cyclic shift. Does this make the job of the encoder easier? Or, as suggested by the correlation analysis perspective, does the structure of the circuit implicate an incremental, correlation-driven learning process? We hope to address such questions in future work. In any case, the design of the decoder circuit is an instance where \textit{we} have learned something new.

\subsection{Acceptance}

Once a good autoencoder is in hand, one can use it and its encoder-half to determine the acceptance using \eqref{eq:accept}. We will demonstrate for the $8\times 8$ Rorschach set. The distribution of Hamming distances
\begin{equation*}
d(y)=d_H\Big(\mathcal{E}(y),\mathcal{A}(\mathcal{E}(y))\Big)
\end{equation*}
for $10^4$ $y$-samples is shown in Figure \ref{fig:RS4hamdist}. We see that the acceptance of our autoencoder is about 19.2\%. We might have expected a number closer to 12.5\% ($2^{-3}$)  since the true entropy of the $8\times 8$ patterns is $32+1$ bits and our encoder has $32+4$ inputs. But that assumes the encoder is injective. As an extreme case, suppose the encoder is able to simply ignore three of its inputs. The acceptance, with this 8-to-1 breakdown of injectivity, would be 100\%.

\begin{figure}
    \centering
    \includegraphics[width=.9\columnwidth]{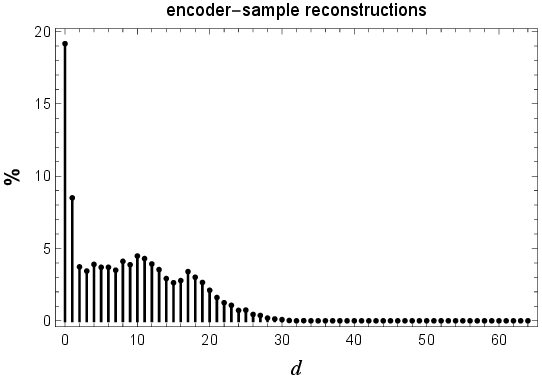}
    
    \caption{Distribution of Hamming distances $d$ when generating $8\times 8$ Rorschach patterns with an autoencoder. The fraction of patterns with $d=0$ is the autoencoder's acceptance.}
    \label{fig:RS4hamdist}
\end{figure}

\subsection{Learnability transition}
\label{sec:RS-learnability-transition}
The learned decoder design (Fig. \ref{fig:RS4dec}) shows that the learning algorithm wholeheartedly eschews the diversity hypothesis on which the small-support-through-large-BTF-margins mechanism is based. All BTFs in the decoder receive input from two pairs of mirror-related pixels on which the diversity hypothesis is strongly violated. When the weights in BTF \eqref{eq:RSBTF} are scaled to have squared norm $m$ ($m=64+1$ in the fully connected architecture), the BTF margin equals $2\times\sqrt{m/4}=\sqrt{m}$. Margins of this size are allowed with support parameters $\sigma\ge 1$ and $\sigma=3$ was used in the experiments.

The diversity hypothesis is upheld in the overwhelming majority of small input-sets, even in the case of Rorschach data. On the other hand, since the learning algorithm may end up taking input mostly from the minority where it is violated, we should suspect results that depend on the hypothesis, such as the complexity estimate \eqref{eq:knet}. Using this estimate on the architecture \eqref{eq:RSarch} for $s=4$, we obtain $k_\text{net}=3645$ bits. Taking this value for $k_\mathcal{A}$ in \eqref{eq:X*size}, we arrive at $|X^*|=120$ for the number of examples at the transition. This is smaller by a factor of 8.5 than the number \eqref{eq:Xksize} used in the scaling study.

The object of the next round of experiments is to bracket the  $8\times 8$ Rorschach transition, for boolnets with architecture \eqref{eq:RSarch} and $\sigma=3$ constraining the margins. The only difference from the $s=4$ scaling experiment is varying the number of examples $X^*$ used for training. Results are summarized in Figure \ref{fig:RS4trans}, where $|X^*|$ ranges between 100 and 500.

\begin{figure*}
    \centering
    \includegraphics[width=2\columnwidth]{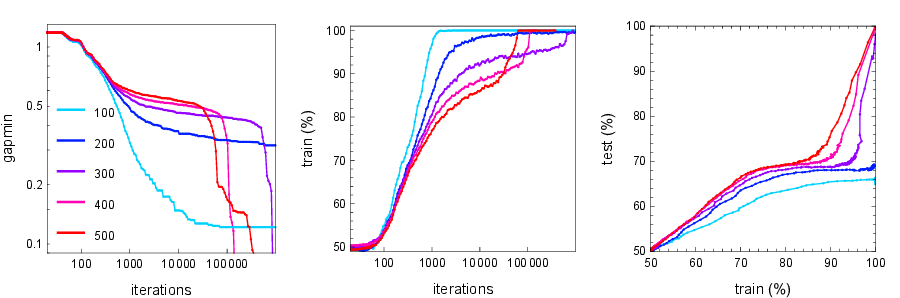}
    
    \caption{Learning the $8\times 8$ Rorschach set with different numbers of examples.}
    \label{fig:RS4trans}
\end{figure*}

Taken together, the three plots provide strong evidence that the learnability transition is located between 100 and 200 examples. There are ``aha'' events at iteration counts where the gap (left panel) takes an abrupt plunge and the training accuracy (center panel) surges upward. As the number of examples is reduced, the iterations required to arrive at the aha-event increases sharply. The experiments were terminated at $10^6$ iterations and bear witness to consummated learning only for 300 examples and higher. But up to the iteration cutoff, the 200-examples plots show the same behavior as the plots for 300 examples. On the other hand, the learning curve (right panel) for 100 examples  is qualitatively different, which we interpret as indicating information insufficiency.

Knowing the location of the learnability transition has little practical value. As demonstrated by the Rorschach sets, to avoid an iterations-blowup one only needs to increase the number of examples at the transition by a small factor. Studies of the learnability transition mostly offer insights about the learning algorithm, e.g. how does the work diverge as the transition is approached? One needs to bear in mind that algorithms can be compared only if they are able to work with the same Boolean function classes, like margin-constrained boolnets.

\section{Wild machine-learnable sets}\label{sec:wild}

Rorschach sets are very special and invite the critique that the experimental results are not representative of machine-learnable sets in general. The aim of this section is to address these concerns.

\subsection{Random-encoder sets}\label{sec:randencoder}

We first consider proto-machine-learnable sets $X_0$ constructed as follows. A  boolnet with architecture
\begin{equation}\label{eq:wildenc}
32\;\to\;64\;\to\;64\;\to\;64\;\to\;64
\end{equation}
and random weights was created to be the encoder $\mathcal{E}_0$ of the set's autoencoder $\mathcal{A}_0$ whose decoder is unknown. The idea is to uniformly sample the encoder's inputs to create samples $X_0^*$ from which one can try to learn $\mathcal{A}_0$. Because bijections such as \eqref{eq:perm} change the decoder and encoder, the learned encoder (when working in tandem with the learned decoder) may even have an architecture simpler than \eqref{eq:wildenc}. To aggressively test this possibility, we used
\begin{equation}\label{eq:wildauto}
64\;\to\;48\;\to\;32\;\to\;48\;\to\;64
\end{equation}
for the autoencoder architecture. But bijections have limited scope and it is highly improbable that a boolnet with architecture \eqref{eq:wildauto} can learn the random encoder-generated set $X_0$ exactly. Accordingly, $X_0$ will be the starting point of machine-learnable-set evolution as described in Section \ref{sec:evolution}. All the autoencoders $\mathcal{A}_0, \mathcal{A}_1,\ldots$ of the evolving sets $X_0, X_1, \ldots$ will have architecture \eqref{eq:wildauto}.

The boolnet of the random encoder $\mathcal{E}_0$ was constructed as follows. Each BTF takes input from either one or three nodes (BTF outputs or network inputs) in the layer below. The two cases have equal probability, and the selection of input nodes is uniformly random, with one condition: every node (in the layer below) finds itself in the input of some BTF. All the weights have equal magnitude and equally-probable signs. In circuit terms, $\mathcal{E}_0$ is comprised of \textsc{Copy}, \textsc{Not}, and 3-input-\textsc{Maj} between adjacent layers such that there are no ``dead-ends''. Two random encoders were created in this way, differing only in their wiring and negations. Each defined a proto set $X_0$ through its examples and was evolved as in Section \ref{sec:evolution}.

\begin{figure*}
    \centering
    \includegraphics[width=2\columnwidth]{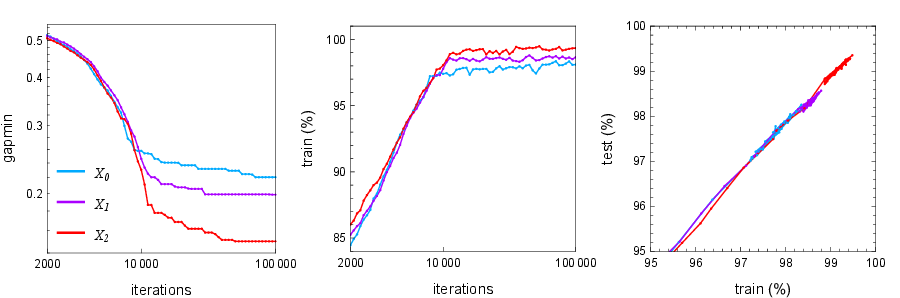}
    
    \caption{Learning the proto-set $X_0$ created by a random encoder and two rounds of its evolution ($X_1$ and $X_2$) under rule \eqref{eq:setevolve}.}
    \label{fig:wildevo}
\end{figure*}

\begin{figure}[b]
    \centering
    \includegraphics[width=.9\columnwidth]{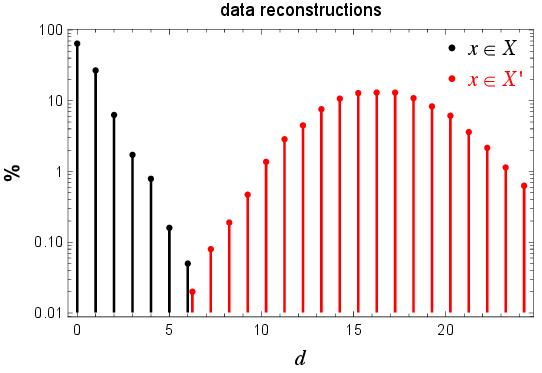}
    
    \caption{Distribution of Hamming distances $d_H(x,\mathcal{A}_2(x))$ for the autoencoder $\mathcal{A}_2$ that learned the twice-evolved wild set $X_2$. The plot compares samples $x\in X_2$ with samples $x\in X'_2$, where $X'_2$ is another, randomly constructed and twice-evolved wild set.}
    \label{fig:wildreconham}
\end{figure}

Figure \ref{fig:wildevo} compares the gap, train-accuracy, and generalization for the proto-set $X_0$ and the first two rounds of evolution, $X_1$ and $X_2$. The learning algorithm was trained with 2000-element subsets of these sets. Because the gaps saturate at nonzero values and the accuracies never quite reach 100\%, these are all imperfect learnable sets. But the evolution is as predicted, with accuracies improving and learning becoming slightly easier (fewer iterations). The detail of the learning curve (right panel) shows that both accuracies are above 99\% for $X_2$.

From the accuracies (train or test) in the plots, defined as the probability of a correctly reconstructed input-bit, we have no way of knowing if there is a small minority of poorly reconstructed samples, or whether the input-output Hamming distance is uniformly small over all the samples. This is resolved in Figure \ref{fig:wildreconham}, which shows the distribution
\begin{equation*}
d_H(x,\mathcal{A}_2(x))
\end{equation*}
where $\mathcal{A}_2$ is the autoencoder that the algorithm learned from the twice-evolved wild set $X_2$ and $x\in X_2$. Also shown (in red) is the distribution of distances when $x\in X'_2$, where $X'_2$ was twice-evolved from a different, random proto-set $X'_0$. We see that imperfection manifests itself as uniformly small Hamming distances $d\le 5$ (black points). Moreover, there are \textit{no} small distances (red points) when autoencoder $\mathcal{A}_2$ is tasked with recognizing elements of the other wild set. ``It's Greek to me,'' is how autoencoder $\mathcal{A}_2$ would describe set $X'_2$.

The zero-Hamming-distance definition \eqref{eq:accept} of the acceptance is too strict when a learnable set is imperfect. 
For the latter we use
\begin{equation}
\alpha=\underset{y\in\{0,1\}^n}{\text{prob}}\Big(d_H\big(\mathcal{E}(y),\mathcal{A}(\mathcal{E}(y))\big)\le d_0\Big)\;,
\end{equation}
where $d_0$ is the distance allowance needed by the autoencoder to recognize elements of the imperfect set it has learned---through no fault of its own---imperfectly. From Figure \ref{fig:wildreconham} we see that $d_0=5$ for set $X_2$. Using this in the definition of acceptance, we find that autoencoder $\mathcal{A}_2$ has acceptance 99.8\%. Since the samples of the starting set $X_0$ were not subject to rejection-sampling by an autoencoder (because $\mathcal{A}_0$ still had to be learned), its acceptance was effectively 100\%. The high acceptance of $\mathcal{A}_2$ is evidence that set-evolution preserves this property.

\subsection{Down-sampled MNIST}

We originally dismissed MNIST as a target of machine learnability because the diversity exhibited by the handwritten digits is continuous in nature, not combinatorial. However, through Fourier down-sampling, the continuous content can be reduced. The left panel of Figure \ref{fig:booLNIST} shows 64 samples of ``booLNIST'', created by down-sampling 5\,000 of the original $28\times 28$ images to $12\times 12$, binarizing, and cropping the result to $8\times 8$. 

\begin{figure*}
    \centering
    \includegraphics[width=2\columnwidth]{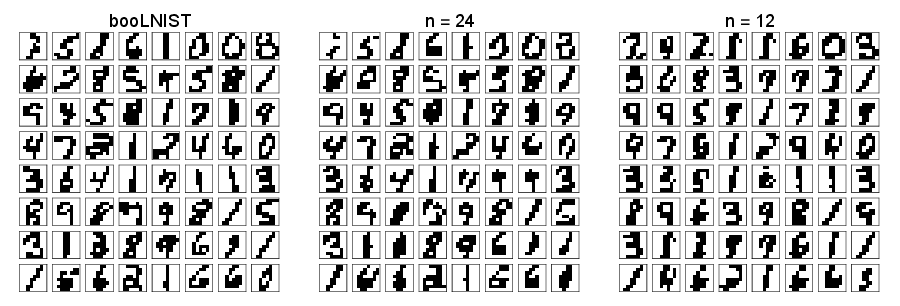}
    
    \caption{Samples of booLNIST images (left) and their evolution by autoencoders with waists $n=24$ and $n=12$.}
    \label{fig:booLNIST}
\end{figure*}

Though booLNIST looks combinatorial, we are left with a new challenge: How do we proceed when the entropy of the learnable set is unknown? A moderate overestimate of the entropy defines the size $n$ of the autoencoder's waist (in bits). For Rorschach sets we knew $n$ from symmetry, and for random-encoder sets we used injectivity to identify $n$ with the number of encoder inputs. For booLNIST we need measurements or experiments to estimate $n$. 

Estimating the entropy of machine-learnable sets is a tricky business. The number of elements we are given for learning the set is so pathetically small that one has to marginalize over all but some fraction of the components of the elements of $X^*$ to get a measurable probability distribution (so the counts of a significant number of elements of the marginalized set are greater than 1). By optimizing over the selection of components one can obtain bounds on the entropy. For booLNIST, marginalizing on all but an optimized set of 40 pixels we get a lower bound of 18 bits. Not surprisingly, these lower-bound pixels are near the image edges. The same procedure to get an upper bound, now with 20 pixels, yields an entropy of 38 bits and pixels near the image center. It's not possible to improve these bounds for the simple reason that even the complete data set of 5\,000 elements is too small for meaningful statistics.

The machine-learnable-set framework provides a practical solution to the entropy estimation problem. The idea is to first learn the whole set, and then evaluate the entropy as $n-\log_2(1/\alpha)$. Here $n$ is the width of the auto\-encoder's waist and $\alpha$ is its acceptance. Entropies defined in this way come with the understanding that the set was learned with a particular autoencoder architecture. We demonstrate with booLNIST and architectures
\begin{equation*}
64\;\to\; 32\;\to\; n\;\to\; 64\;\to\;64\;\to\; 64\;,
\end{equation*}
and try two values for the waist $n$ : $12$ and $24$. Clearly, only imperfect learning is possible with these simple architectures, in particular, when $n$ is small. But we can use the deterministic refinement \eqref{eq:setevolve} to evolve booLNIST into a true learnable set for each $n$. This seems like a bold plan, but in the present setting it is a very natural thing to do.

\begin{figure*}
    \centering
    \includegraphics[width=2\columnwidth]{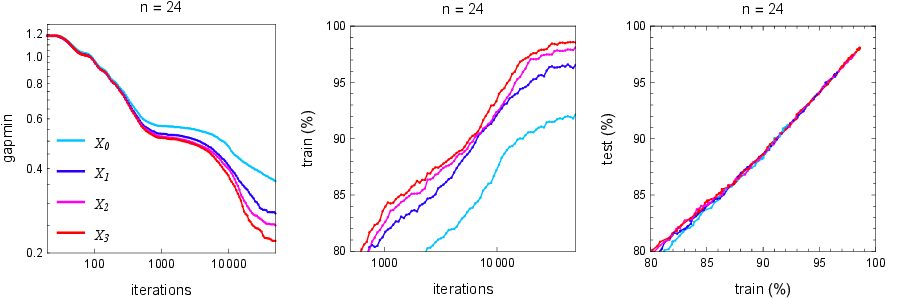}
    
    \caption{Evolution of the gap (left) and train-accuracy (center) as booLNIST undergoes three rounds of evolution with the $n=24$ autoencoder. Evolution has no effect on the train-test accuracy relationship (right panel).}
    \label{fig:booLNIST24}
\end{figure*}

\begin{figure*}
    \centering
    \includegraphics[width=2\columnwidth]{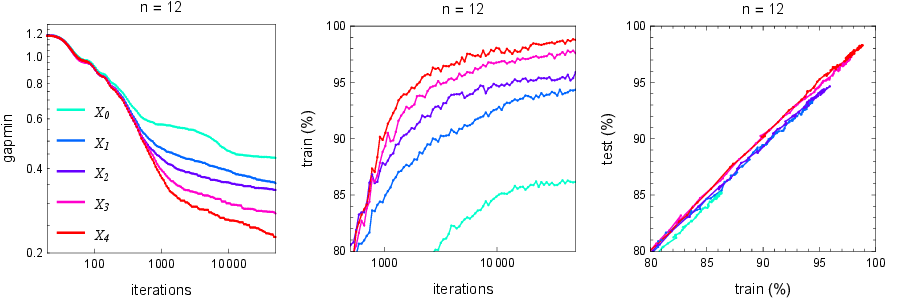}
    
    \caption{Same as Figure \ref{fig:booLNIST24} but for the $n=12$ architecture and an additional round of evolution.}
    \label{fig:booLNIST12}
\end{figure*}

\begin{figure*}
    \centering
    \includegraphics[width=2\columnwidth]{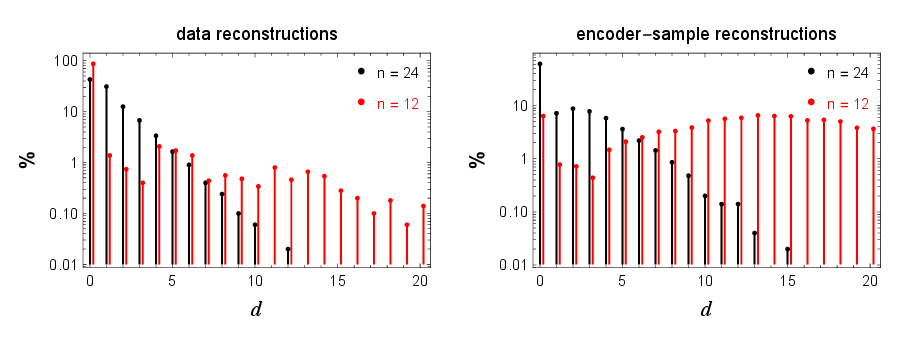}
    
    \caption{Distribution of Hamming distance when the two booLNIST autoencoders reconstruct data (left) and try to reconstruct encoder-generated data (right).}
    \label{fig:booLNISTham}
\end{figure*}

Consider two values for the waist, $n_1<n_2$, and let $\overline{X}_{1}$ and $\overline{X}_{2}$ be the corresponding evolved sets after many iterations of \eqref{eq:setevolve}. Both evolutions start with the same set $X^*$. Now if $x\in X^*$ evolves to $x_1\in \overline{X}_{1}$ and $x_2\in \overline{X}_{2}$ under the two autoencoder architectures, then we expect
\begin{equation*}
d_H(x,x_1)\ge d_H(x,x_2)\;.
\end{equation*}
Elements of the set learned under the architecture with the narrower waist have to change more to be fixed by their autoencoder. Knowing this, why would one learn with the narrower waist? The reason is simple: $|X_1|\ll|X_2|$, because $n_1<n_2$. By accepting larger refinements of its elements, the set $X_1$ has managed to do a better job at distilling the content of the original set-excerpt $X^*$. 
In the case of booLNIST it means that images $x_1$ generated by rejection-sampling with the evolved autoencoder $\overline{\mathcal{A}}_1$ of waist $n_1$ will ``look more like numbers'' than images $x_2$ generated with $\overline{\mathcal{A}}_2$ (and its larger waist). This testable prediction (and curiosity) motivated the booLNIST experiments. We made a vow to report our findings regardless of outcome.

All experiments used 2\,000 training data. Because the test-accuracy (based on the remaining 3\,000 data) always closely tracked the train-accuracy, increasing the number of training data would not have changed the results.

Figure \ref{fig:booLNIST24} shows the effect of three rounds of set-evolution for the $n=24$ architecture. The curve marked $X_0$ corresponds to the original booLNIST data. The gap reaches smaller values and in fewer iterations as the set evolves, and the accuracies improve as well. Set $X_3$ reaches accuracies $(98.5\%,98.0\%)$ at the end of training (50\,000 iterations). Results for the $n=12$ architecture are shown in Figure \ref{fig:booLNIST12}. Now the accuracies for the original booLNIST data are quite low, $(86.2\%,85.2\%)$, but reach $(98.8\%, 98.3\%)$ after four rounds of evolution. With the smaller waist the test-accuracy (right panel) shows a slight but systematic improvement with set-evolution.

One motivation for this experiment with image data was to directly observe the evolution of a set as its learnability improves. To see this, we return to Figure \ref{fig:booLNIST}. The result of three rounds of evolution with $n=24$ is shown in the center panel, and $n=12$ after four rounds is on the right. We need to remember that the learning algorithm does \textit{not} tweak individual images by a few pixels in each round (somehow by rules that depend on $n$). There is a co-evolution of \textit{all} the (training data) images, and this in tandem with the weights of an autoencoder that seeks to fix all the images. Each autoencoder's weights are not refinements of the weights of the previous autoencoder, but are learned from scratch (random initialization) in each round. It is gratifying that even with this indirect protocol, the ``fitness'' of sets exhibits a systematic increase (Figs. \ref{fig:booLNIST24}-\ref{fig:booLNIST12}). Finally, we should not be dismayed to see 8's evolving into 3's, or 4's into 9's. The point of the exercise was to estimate entropy, not to read zip-codes!

Even with about the same level of average bit reconstruction accuracy, the two architectures have quite different distributions of the Hamming distance to the reconstructed data. This is shown in the left panel of Figure \ref{fig:booLNISTham} for the most evolved sets when $n=24$ and $n=12$. The autoencoder with the smaller waist has the greater fraction of data reconstructed perfectly, 86.9\%, but the distribution also has a long, low frequency tail extending to large Hamming distance. Doubling the waist of the autoencoder halves the rate of perfect reconstruction, 42.9\%, and there is no long tail.

The most significant effect of the size of the autoencoder waist is the rate of rejections by the autoencoder, of data generated by sampling the inputs of the encoder. This is shown in the right panel of Figure \ref{fig:booLNISTham}. With $d>0$ as the criterion for rejection, about $\alpha=61.2\%$ of the encoder-generated samples pass the set-membership test when $n=24$. This rate drops to $\alpha=6.4\%$ in the narrow-waisted autoencoder. The corresponding model entropies, $n-\log_2(1/\alpha)$, are respectively 23.3 and 8.0 bits. A glance at the accepted encoder-generated images, shown in Figure \ref{fig:booLNISTgen}, indicates that the lower entropy value is closer to the ``truth''. Most of the images generated by the $n=24$ encoder/autoencoder have such low connectivity (of black pixels) they would not even pass as imitations of something handwritten. 

\begin{figure}
    \centering
    \includegraphics[width=\columnwidth]{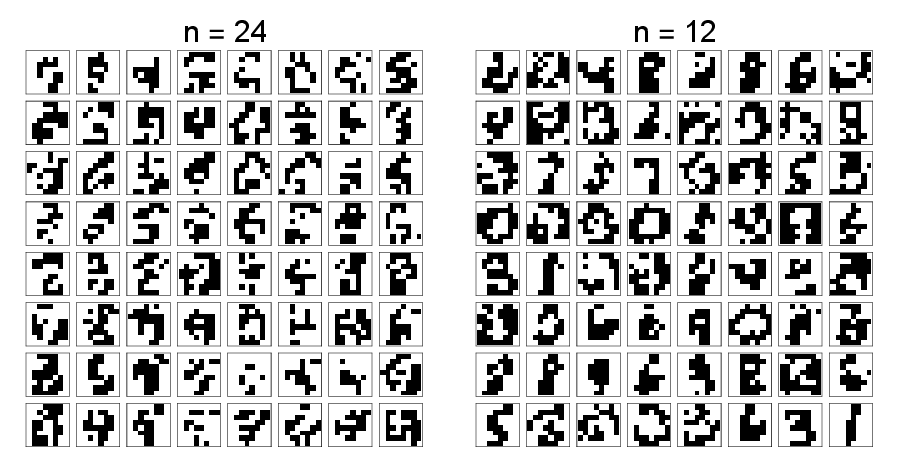}
    
    \caption{Images generated by uniformly sampling the encoder's inputs and retaining only those outputs that are  fixed by the autoencoder. The set evolved with the $n=12$ autoencoder (right) at least respects the property that the black pixels are well-connected.}
    \label{fig:booLNISTgen}
\end{figure}

\section{Discussion}\label{sec:discussion}

When children acquire language we can be sure they are not minimizing risk. The data samples are far too small for the brain to collect meaningful ``statistics''. We could cite Chomsky, but an articulate 4-year-old known to one of us summed it up nicely. Asked whether she is partly responsible for the recent explosion of new words babbled by her $1\,\frac{1}{2}$-year-old sister, her reply: ``No, she just knows.’’

This anecdote is meant to remind us that in spite of the paradigm-shift brought on by recent advances in machine learning, there is still much we do not understand. Accordingly, our concluding remarks are mostly questions, not answers.

``Generalization'' has never been as fiercely debated as it has in the AI era, perhaps because it has become possible to ``do'' it (not just talk about it). No one disputes that a \textit{hypothesis space} should lie at the core of its definition. However, opinions differ on the ``doing''---how the space is implemented. At one extreme is a structural space, like the conjunctive normal form expressions in the debut of PAC learning \cite{valiant1984theory}. At the other extreme is to let the learning dynamics play a role in its definition, as happens when using gradient-descent on over-parameterized deep networks. In our work---machine learnable sets---the hypothesis space is explicitly structural, and yet is implemented with deep networks. Are dynamically defined hypothesis spaces better at generalization, or are they just the only game in town when one is limited to gradient-based optimization?

Purely structural hypothesis spaces seem hopelessly optimistic when confronted with messy, noisy data. Statistical learning theory offers a lifeline by making \textit{distributions} the target of learning. A brilliant example of the method is the variational autoencoder (VAE) \cite{diederik2019introduction}. This starts out with the intractability of computing the marginal of a distribution we wish to express in terms of a joint distribution with latent variables. A neural network can be used to express complex distributions of the data, conditional on the latent variables. But to optimize the neural network's parameters---to increase the log-likelihood of the data---one needs the computationally intractable marginal distribution, or by Bayes' rule, the unknown posterior distribution of the latent variables. The VAE is the clever solution where another neural network is used to express the posterior, and Jensen's inequality is used to conjure a loss that is minimized when the ``variational'' posterior matches the true posterior. The resulting model is an autoencoder that includes loss terms associated with the latent variables.

But how sound, structurally, is the learning achieved by VAEs? A popular demonstration uses the MNIST data and just two latent variables at the autoencoder's waist. A Gaussian is the universal choice for the prior distribution on the latent variables, for which the associated VAE loss translates to a quadratic cost that tries to keep each digit's latent variables close to the origin. The autoencoder's reconstruction loss is minimized when similar digits have similar latent variables. But the quadratic cost of the Gaussian prior keeps the latent distribution of all the digits compact, so regions of similar digits tile the plane near the origin without wasting space. The region of 8's shares a boundary with the region of 3's, along which 8's morph continuously into 3's. Clearly this is not something we want the machine to learn, as it disrespects topology (or the difference in how these digits are rendered, by a hand, in time). The culprit, of course, is the Gaussian prior which turned out to impose the wrong structure on the hypothesis space.

Statistical learning theory is also used to model the progress of learning. By how much should we expect the test accuracy to increase as the number of training items is increased? The PAC model \cite{valiant1984theory} delivers a rigorous answer to this question, but only when the hypothesis space is so simple a relatively trivial learning algorithm exists. Moreover, the improvement in the (expected) accuracy is gradual and not at all like our experience with language acquisition by children. But the abrupt onset of generalization---like a phase transition in statistical mechanics---is exactly what we observe in machine-learnable sets (Fig. \ref{fig:RS4trans}, right panel). Is this because the constraint-based learning algorithm  in these experiments (RRR) is far from trivial, apparently creating the conditions for a phase transition when the number of constraints crosses a critical threshold?

Research on deep networks has been light on investigations of \textit{elementary mechanisms}. We can compare it to a semiconductor industry created without the principles of quantum mechanics. In the absence of a comprehensive theory of nonconvex optimization, such investigations will be experimental. But surely there are better experiments---for investigating the formation of neural pathways---than telling trousers from t-shirts \cite{xiao2017fashion}? The experiments and data favored by industry may not be the best for understanding elementary phenomena. As a complement, we encourage researchers to have their deep networks take ``the Rorschach test'' (Section \ref{sec:RS}). This is not to diagnose schizophrenia, but to see what kinds of neural pathways develop that detect this symmetry. We have studied this phenomenon in networks of Boolean threshold functions (boolnets) and it is quite interesting. It would be a shame if standard networks did not exhibit something similarly elegant.

New protocols for machine learning are also worth exploring, again because they may offer fresh insights on hotly debated topics. If nothing else, machine-learnable sets give us a new way of representing and propagating information. An object of impractically large size—the set itself—is defined in terms of two quite finite things: a small fragment of the object and complexity bounds on some Boolean circuits. From the fragment one can learn the settings of the switches in the Boolean circuits that define the whole. True machine-learnable sets are propagated among agents just by the act of learning from fragments.

Are machine-learnable sets singularly special points in the universe of objects one cares to learn? Our experiments lead us to believe the answer to this question is ``no''. The main evidence is the evolution of imperfect sets by rule \eqref{eq:setevolve} in Section \ref{sec:randencoder}, where two imperfect sets were generated by random 4-layer encoders. After only two rounds of evolution the two sets and their autoencoders had evolved enough to boost their self-recognition accuracy (on test samples) above 99\% (Fig. \ref{fig:wildevo}, right panel), while at the same time never confused the other's elements as their own. The encoders used to seed those two sets had so many random parameters, the number of near-perfect machine-learnable sets that can be generated in this way is practically unlimited.

\begin{acknowledgments}
V.E. thanks Shikang He and Yizhou Li for feedback on an early definition of the eponymous sets.
\end{acknowledgments}

\subsection*{Responsible research statement}

AI was not used in any part of this work (concept, programming, writing, artwork) with one exception: We asked MS-Copilot to search the literature for work with ``significant conceptual overlap'', that is, similarity beyond shared keywords. That search produced two publications~\cite{kirby2008cumulative,vandenoord2017neural}, though nothing resembling machine-learnable sets.

No animal testing was conducted for this study and no functions were differentiated.

\bibliographystyle{unsrtnat}
\bibliography{MLS}

\appendix

\section{Soft boolnets?}

Readers interested in experimenting with machine-learnable sets but not
ready to adopt the constraint-based methodology, will need to consider ``soft'' alternatives
that are amenable to gradient optimization. After discussing the challenges this creates, we present preliminary findings for an approach where $\sgn(\;)$ is replaced with $\tanh(\;)$ and weight normalization is imposed with a penalty function.

A soft autoencoder built with ReLU activation functions and suitably overparametrized might do quite well if the only task is to accurately reconstruct the inputs at the outputs (even when these are $\pm 1$ valued). But machine-learnable sets make two additional demands on the model. First, the values at the autoencoder's waist should also be discrete ($\pm 1$) in order to satisfy criterion 2 of machine-learnable sets (uniform sampling). Second, the decoder and encoder should fall into function classes whose complexity can be bounded (ideally like a Boolean circuit). These demands indicate that at a minimum \textit{all} the activation functions should saturate at two values, like $\tanh(\;)$.

In boolnets the complexity is additionally controlled by gate-input sparsity. Through the combination of a lower bound on the Boolean threshold function margins, and a constraint on the norms of the weights responsible for these margins, the BTFs in the model are equivalent to few-input logic gates (rigorously, under the diversity hypothesis \cite{elser2026learning}). There are soft counterparts to such constraints but so far our efforts using them have come up short, for reasons well known to anyone who has used convex optimization on highly non-convex problems.

The models learned by gradient methods are determined by the basin of the initial point. If the models correspond to circuit-like function classes, the basins will have a combinatorial character, only some of which are able to exactly reconstruct the autoencoder's outputs from its inputs. When the initial point falls in a bad basin, the only recourse is another training attempt. Though this is a viable approach when the rate of successes is not too small, it stands in sharp contrast to the demonstrated reliability of the constraint-based method. We encourage developers of soft boolnets to evaluate their methods with ``the Rorschach test''. The results for $8\times 8$ data, in Figure \ref{fig:RS4runs}, provide a point of comparison. Not only does the non-gradient method succeed in every attempt (two are shown), the progress of learning (gap and accuracy vs. iterations) is reproduced in detail across attempts.

A soft formulation we found that manages to find exact solutions for $8\times 8$ Rorschach data on some fraction of attempts uses the activation function
\begin{equation*}
y=\tanh\big((w\cdot x+b)/\mu\big)\;,
\end{equation*}
where $\mu$ controls the width of the transition between $-1$ and $+1$ relative to the neuron's linear input. The $\mu$ hyperparameter cannot be eliminated by rescaling the $w$'s and $b$'s of the model because we also impose the following penalty on the squared-norms of the neurons' weights:
\begin{equation*}
L_w=\lambda_w\sum_n\big(\|w_n\|^2-m_n \big)^2\;.
\end{equation*}
Here $\lambda_w>0$ is the weight-norm-penalty hyperparameter, and $m_n$ is the number of inputs to neuron $n$ (the width of the layer below neuron $n$). Changing the target values of the squared-norms is equivalent to a change in $\mu$. The choice $\|w_n\|^2=m_n$ is natural because $w_n$ and $x_n$ appear symmetrically in the preactivation, while $\|x_n\|^2=m_n$ when $x_n$ is a $\pm 1$ vector. Our model's loss function combines the weight-norm penalty with the autoencoder's reconstruction loss over all training data:
\begin{equation}\label{eq:loss}
L=L_w+\sum_{x\in X^*} \sum_i\big(\mathcal{A}(x)_i-x_i\big)^2\;.
\end{equation}

In our $8\times 8$ Rorschach experiments we used the same autoencoder architecture
\begin{equation*}
64\;\to\;36\;\to\;96\;\to\;64
\end{equation*}
that was used with the constraint method. The loss function \eqref{eq:loss} for 500 training data was minimized with the ADAM optimizer \cite{kingma2014adam} at its default settings. The constraint method easily achieves 100\% train and test accuracy with this much data (Fig. \ref{fig:RS4trans}). The success rates, in 20 trials, for the gradient method are tabulated in Table \ref{tab:successrates} for a range of the hyperparameters $\mu$ and $\lambda_w$ and a limit of $10^5$ on the number of gradient steps. We give the soft method the benefit of the doubt by designating an attempt successful if at least $99\%$ of the autoencoder's input and output bits (across 500 training data) are in agreement. Thus a table entry of $50\%$ means that $10$ of the $20$ trials met the $99\%$ accuracy threshold.

\begin{table}[t]
\centering
\caption{Success rates for learning the $8\times 8$ Rorschach set with different $\mu$ (rows) and $\bar\lambda_w=10^4\;\lambda_w$ (columns).}
\label{tab:successrates}
\setlength{\tabcolsep}{16pt} 
\begin{tabular}{r|rrr}
& $\bar\lambda_w=0.5$ & $1$ & $1.5$ \\
\hline
$\mu=6$ & $25\%$ & $35\%$ & $30\%$ \\
$8$ & $40\%$ & $60\%$ & $55\%$ \\
$10$ & $35\%$ & $45\%$ & $25\%$ \\
\end{tabular}
\end{table}

That any of the entries in  Table \ref{tab:successrates} are above 0\% is encouraging. However, much work needs to be done. The interaction between the steepness of $\tanh(\;)$ and the weight-norm penalty that expands the sizes of the ``good'' basins is a black box. The mechanism at work needs to be understood in order that settings for $(\mu,\lambda_w)$---that work well for different architectures and data types/sizes---can be determined without trial and error.

\end{document}